\documentclass[10pt,twocolumn]{IEEEtran}

\usepackage{booktabs}
\usepackage{subfigure}
\usepackage{amssymb}
\usepackage{amsfonts}
\usepackage{amsmath}
\usepackage{array}
\usepackage{threeparttable}
\usepackage{booktabs}
\usepackage{cite}
\usepackage[dvips]{graphicx}
\usepackage{color}
\usepackage{comment}
\usepackage{makecell}
\usepackage{multirow}
\usepackage{textcomp,mathcomp}
\usepackage[linesnumbered, ruled, vlined]{algorithm2e}
\usepackage{subcaption}
\usepackage{caption}
\usepackage{pifont}
\usepackage[colorlinks=true, linkcolor=red]{hyperref}

\begin{document}
\newtheorem{definition}{\it Definition}
\newtheorem{theorem}{\bf Theorem}
\newtheorem{lemma}{\it Lemma}
\newtheorem{corollary}{\bf Corollary}
\newtheorem{remark}{\it Remark}
\newtheorem{example}{\it Example}
\newtheorem{case}{\bf Case Study}
\newtheorem{assumption}{\it Assumption}
\newtheorem{property}{\it Property}
\newtheorem{proposition}{\bf Proposition}

\newcommand{\hP}[1]{{\boldsymbol h}_{{#1}{\bullet}}}
\newcommand{\hS}[1]{{\boldsymbol h}_{{\bullet}{#1}}}

\newcommand{\ba}{\boldsymbol{a}}
\newcommand{\baq}{\overline{q}}
\newcommand{\bA}{\boldsymbol{A}}
\newcommand{\bb}{\boldsymbol{b}}
\newcommand{\bB}{\boldsymbol{B}}
\newcommand{\bc}{\boldsymbol{c}}
\newcommand{\bC}{\boldsymbol{C}}
\newcommand{\bd}{\boldsymbol{d}}
\newcommand{\bcD}{\boldsymbol{\cal D}}
\newcommand{\bcO}{\boldsymbol{\cal O}}
\newcommand{\bh}{\boldsymbol{h}}
\newcommand{\bH}{\boldsymbol{H}}
\newcommand{\bl}{\boldsymbol{l}}
\newcommand{\bm}{\boldsymbol{m}}
\newcommand{\bn}{\boldsymbol{n}}
\newcommand{\bo}{\boldsymbol{o}}
\newcommand{\bO}{\boldsymbol{O}}
\newcommand{\bp}{\boldsymbol{p}}
\newcommand{\bq}{\boldsymbol{q}}
\newcommand{\bR}{\boldsymbol{R}}
\newcommand{\bs}{\boldsymbol{s}}
\newcommand{\bS}{\boldsymbol{S}}
\newcommand{\bT}{\boldsymbol{T}}
\newcommand{\bw}{\boldsymbol{w}}
\newcommand{\bz}{\boldsymbol{z}}

\newcommand{\balpha}{\boldsymbol{\alpha}}
\newcommand{\bbeta}{\boldsymbol{\beta}}
\newcommand{\bgamma}{\boldsymbol{\gamma}}
\newcommand{\bkappa}{\boldsymbol{\kappa}}
\newcommand{\bomega}{\boldsymbol{\omega}}
\newcommand{\btomega}{\boldsymbol{\tilde \omega}}

\newcommand{\bOmega}{\boldsymbol{\Omega}}
\newcommand{\bGamma}{\boldsymbol{\Gamma}}

\newcommand{\bTheta}{\boldsymbol{\Theta}}
\newcommand{\bphi}{\boldsymbol{\phi}}
\newcommand{\btheta}{\boldsymbol{\theta}}
\newcommand{\bvarpi}{\boldsymbol{\varpi}}
\newcommand{\bpi}{\boldsymbol{\pi}}
\newcommand{\bpsi}{\boldsymbol{\psi}}
\newcommand{\brho}{\boldsymbol{\rho}}
\newcommand{\bxi}{\boldsymbol{\xi}}
\newcommand{\bx}{\boldsymbol{x}}
\newcommand{\by}{\boldsymbol{y}}

\newcommand{\cA}{{\cal A}}
\newcommand{\bcA}{\boldsymbol{\cal A}}
\newcommand{\cB}{{\cal B}}
\newcommand{\cD}{{\cal D}}
\newcommand{\cE}{{\cal E}}
\newcommand{\cG}{{\cal G}}
\newcommand{\cH}{{\cal H}}
\newcommand{\cI}{{\cal I}}
\newcommand{\bcH}{\boldsymbol {\cal H}}
\newcommand{\cJ}{{\cal J}}
\newcommand{\cK}{{\cal K}}
\newcommand{\cL}{{\cal L}}
\newcommand{\cM}{{\cal M}}
\newcommand{\cO}{{\cal O}}
\newcommand{\cR}{{\cal R}}
\newcommand{\cS}{{\cal S}}
\newcommand{\dcS}{\ddot{{\cal S}}}
\newcommand{\ds}{\ddot{{s}}}
\newcommand{\cT}{{\cal T}}
\newcommand{\cU}{{\cal U}}
\newcommand{\wt}[1]{\widetilde{#1}}

\newcommand{\mA}{\mathbb{A}}
\newcommand{\mE}{\mathbb{E}}
\newcommand{\mG}{\mathbb{G}}
\newcommand{\mR}{\mathbb{R}}
\newcommand{\mS}{\mathbb{S}}
\newcommand{\mU}{\mathbb{U}}
\newcommand{\mV}{\mathbb{V}}
\newcommand{\mW}{\mathbb{W}}

\newcommand{\uq}{\underline{q}}
\newcommand{\ubq}{\underline{\boldsymbol q}}

\newcommand{\red}[1]{\textcolor[rgb]{1,0,0}{#1}}
\newcommand{\gre}[1]{\textcolor[rgb]{0,1,0}{#1}}
\newcommand{\blu}[1]{\textcolor[rgb]{0,0,0}{#1}}
\newcommand{\ltgr}[1]{\textcolor[rgb]{0.6,0.6,0.6}{#1}}

\newcommand{\best}{$\uparrow$}
\newcommand{\worst}{$\downarrow$}

\title{Generalization Bounds of Emergent Communications for Agentic AI Networking}

\author{\IEEEauthorblockA{Yong Xiao\IEEEauthorrefmark{1}\IEEEauthorrefmark{2}\IEEEauthorrefmark{3}, Jingxuan Chai\IEEEauthorrefmark{4}, Guangming Shi\IEEEauthorrefmark{4}\IEEEauthorrefmark{2}, Ping Zhang\IEEEauthorrefmark{5} \\ 
\IEEEauthorblockA{\IEEEauthorrefmark{1} School of Elect. Inform. \& Commun., Huazhong Univ. of Science \& Technology, China}\\
\IEEEauthorblockA{\IEEEauthorrefmark{2} Peng Cheng Laboratory, Shenzhen, China}\\
\IEEEauthorblockA{\IEEEauthorrefmark{3} Pazhou Laboratory (Huangpu), Guangzhou, China}\\
\IEEEauthorblockA{\IEEEauthorrefmark{4} School of Artificial Intelligence, Xidian University, Xi'an, China}\\
\IEEEauthorblockA{\IEEEauthorrefmark{5} State Key Lab. of Networking \& Switching Tech., Beijing Univ. of Posts \& Telecom., Beijing, China}
\thanks{*This work will be presented at IEEE ISIT Workshop, Guangzhou, China, June 2026. Copyright may be transferred without notice, after which this version may no longer be accessible.} 
}
}

\maketitle

\begin{abstract}
The evolution of 6G networking toward agentic AI networking (AgentNet) systems requires a shift from traditional data pipelines to task-aware, agentic AI-native communication solutions. Emergent communication, a novel communication paradigm in which autonomous agents learn their own signaling protocols through interaction, is increasingly viewed as a promising solution to address the challenges posed by existing rigid, predefined protocol-based networking architecture. However, most existing emergent communication frameworks fail to account for physical networking constraints, such as bandwidth and computational complexity, and often lack a rigorous information-theoretical foundation. To address these challenges, this paper introduces a novel emergent communication framework that facilitates collaborative task-solving among heterogeneous agents through an information-theoretic lens. We propose a novel joint loss function that unifies the optimization of decision-making functions and the learning of communication signaling. Our proposed solution is grounded on the multi-agent and multi-task distributed information bottleneck (DIB) theory, which allows the quantification of the fundamental trade-off between task-relevant information representation and computational complexity. We further provide theoretical generalization bounds of the emergent communication protocol during decentralized inference across unseen environmental states. 
Experimental validation on a real-world hardware prototype confirms that our proposed framework significantly improves generalization performance, compared to the state-of-the-art solutions. 
\end{abstract}
\begin{IEEEkeywords}
Emergent communication, agentic AI networking, generalization bounds, 6G. 
\end{IEEEkeywords}

\section{Introduction}
\label{Section_Introduction}

The rapid evolution of mobile networking systems toward Agentic AI networking (AgentNet) introduces novel challenges for traditional communication frameworks and networking protocols\cite{xiao2025AgentNet,XY2024ACMGetMobileSAN,xiao2023reasoning,shi2020semantic,ZhangPing2022EngineeringSemanticComm}. In particular, existing communication systems rely on rigid, predefined rules and fixed frame structures that struggle to accommodate the dynamic, multi-modal interactions among diverse AI agents. These protocols often lack the flexibility and scalability required for real-time coordination among heterogeneous AI agents. Because traditional protocols are task-agnostic, predefined, and fixed, they cannot adapt to specific task needs, creating a significant barrier to efficient task-driven multi-agent collaboration. In many cases, the rigid constraints of human-designed protocols lead to either excessive signaling overhead or a lack of awareness of the critical task-relevant context.

Emergent communication offers a promising solution to these challenges by enabling collaborative agents to develop their own communication protocols through direct interaction \cite{XY2026SANEmerg}. Rather than adhering to fixed, hand-crafted rules, agents learn to exchange signals that maximize their collective utility, discovering adaptive messaging schemes tailored to their specific task requirements and environmental features. This process enables the creation of grounded, semantically rich interfaces that evolve autonomously alongside agents' capabilities. By treating communication as a learned action, the network can autonomously optimize, learn, and evolve, ensuring that emergent communication signaling and protocols are perfectly aligned with the agents' decision-making processes\cite{Boldt2025EmergCommSearching,Chen2026FiveWofMultiAgentComm}.


Despite its promising potential, most existing work on emergent communication prioritizes the emergence of human-language-like features while largely overlooking the practical constraints of physical networking systems, such as bandwidth limitations and the computational complexity limits of agents. Furthermore, most existing works remain predominantly experimental and heuristic-driven; most frameworks lack the rigorous information-theoretic foundations necessary to guarantee performance and robustness in real-world, resource-constrained networking environments\cite{XY2026SANEmerg}. 

This motivates the work in this paper, in which we focus on developing an information-theoretically grounded framework for emergent communication solutions for communication and computationally bounded AgentNet systems. In particular, we introduce a novel joint loss function that enables the simultaneous training of task-specific decision-making functions and emergent communication models. By unifying these typically disparate processes into a single optimization objective, our proposed solution eliminates the computational and communicational redundancy inherent in developing separate modules for task execution and communication signaling. We also establish a rigorous theoretical foundation for multi-agent emergent communication based on Distributed Information Bottleneck (DIB) theory. This framework provides the mathematical tools necessary to quantify and optimize the fundamental trade-off between preserving task-essential information (relevance) and minimizing the computational complexity, quantified by the Minimum Description Length (MDL), of the emergent signals during communication. We present a theoretical analysis of the generalization capabilities of the learned communication protocols. By deriving bounds on the generalization error, we provide mathematical guarantees on the performance consistency and structural stability of emergent communication when transitioning from training environments to unseen decentralized inference scenarios. Finally, the proposed emergent communication framework is validated through extensive experiments conducted on a hardware mobile networking prototype. The empirical results demonstrate that our proposed solution outperforms state-of-the-art emergent communication solutions, achieving substantial improvements in generalization performance and robustness under dynamic networking environments.

\section{System Model and Problem Formulation}
\label{Section_SystemModel}

\begin{figure}
\centering
\includegraphics[width=0.8\linewidth]{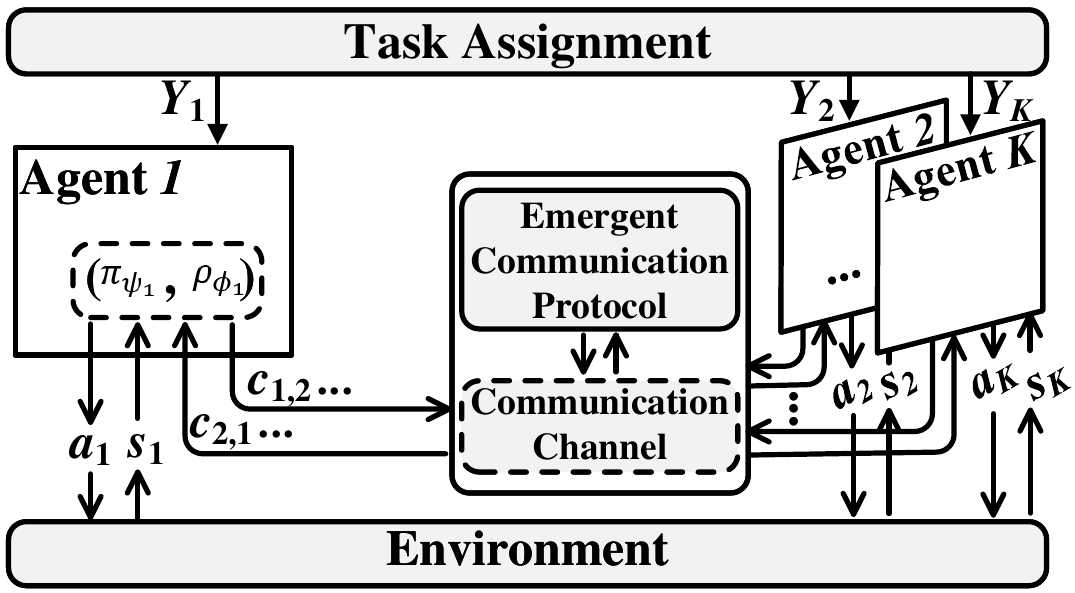}
\caption{\footnotesize A multi-agent emergent communication system model.}
\label{Figure_EmergCommModel}
\vspace{-0.1in}
\end{figure}



We consider a multi-agent emergent communication system in which a set of agents is selected to collaborate on solving a set of tasks, defined as a tuple $\langle {\cal K}, {\cal Y}, \boldsymbol{\cal S}, \bc, \bpi \rangle$, illustrated in Fig. \ref{Figure_EmergCommModel}.  

\noindent{\bf (1) A set of agents} ${\cal K}$: A set of $K$ agents, $\mathcal{K} = \{1, 2, \dots, K\}$, is deployed in a shared physical or networking environment. Generally speaking, different agents observe different states of the environment and have different specialized functional capabilities. For example, a physical-layer agent may observe channel state information (CSI) and possess skillsets in channel adaptation and waveform optimization. In contrast, an application-layer agent may observe diverse modalities of application data sequences, such as real-time video streams or semantic metadata, and can specialize in intent recognition or application-level parameter tuning. These agents can be selected to jointly solve cross-layer objectives: the application-layer agent identifies and extracts the most task-relevant semantics and adapts source parameters, e.g., video streaming resolution, to meet user requirements, while the physical-layer agent simultaneously optimizes transmission strategies to ensure these critical features are prioritized over a fluctuating wireless channel\cite{XY2026SANet, Zhu2024SANSee}.

\noindent{\bf (2) Task space} ${\cal Y}$: A set of tasks or intents, each of which has a specific optimization objective, i.e., a loss function, is assigned to the agents. We assume each agent $k$ is assigned a specific task, and we can therefore use $\cL_k$ to denote the associated loss function assigned to agent $k$. 
Generally speaking, each agent $k$'s local observation is insufficient to perfectly solve the assigned task $Y_k$. Thus, agent $k$ may require supplementary information from other agents. To facilitate the collaborative task solving and communication emergence during the training phase, for each specific task, a task-specific random variable $Y_k \in {\cal Y}$, e.g., an intended reward signal, is provided to each agent $k$ in the training dataset to guide its policy optimization. This variable represents the objective feedback or ``ground truth" objectives that the agent aims to optimize, essentially serving as the semantic anchor for both its local decision-making and its communication protocol development. Consider the cross-layer optimization scenario as an example. During the model training phase, whenever a user's Quality-of-Experience (QoE) requirement is satisfied, e.g., by maintaining a high-definition video stream with negligible jitter, a positive reward instance $y_k$ is issued to both the application-layer agent and the physical-layer agent to encourage the application-layer agent to refine its semantic extraction and source adaptation policies while simultaneously steering the physical-layer agent toward more robust channel adaptation and transmit parameter optimization strategies. In this case, both agents learn to develop an emergent protocol that prioritizes the exchange of information most critical to sustaining user satisfaction across the various layers of the networking systems. Note that this reward variable can be provided offline as a dedicated training dataset prior to deployment, or delivered online during inference to facilitate real-time model updates; in either case, the agents continue to collaborate in a fully decentralized manner during inference.

    
\noindent{\bf (3) State space} $\boldsymbol{\cal S}$: Each agent can have a local, noisy, and partial observation of the globally shared environmental state $S_k$. Let ${\cal S}_k$ be the local state space associated with agent $k$, i.e., we have $S_k \in {\cal S}_k$. The joint observations of all the agents $\langle S_1, \dots, S_K \rangle$ can be highly correlated, since they can all be considered partial observations or projections of the shared environmental state. 
    
\noindent{\bf (4) Emergent communication signaling} $\bC$: The agents can learn emergent communication signals or protocols to facilitate their coordination and collaborative optimization. In this paper, we consider the deep-learning-based emergent communication\cite{Lazaridou2020EmergMultiAgentComm,XY2026SANEmerg}, in which each agent $k$ learns a communication emergent function $\rho_{\phi_k}: s_k \rightarrow \bc_{k,-k}$ parameterized by $\phi_k$ that maps its locally observed state to a set of messages $\bc_{k,-k} = \langle C_{k,j} \rangle_{j\in {\cal K}\backslash k}$ sent to other agents, where $s_k$ is a state realization of agent $k$, i.e., the realized message vector sent by agent $k$ to other agents $\bc_{k,-k} = \langle c_{k,j} \rangle_{j \in \mathcal{K} \setminus k}$ is given by 
    \begin{eqnarray}
    \bc_{k,-k} \sim \rho_{\phi_k}( \bC_{k,-k} | S_k = s_k ).
    \end{eqnarray}
   where $\bC_{k,-k} = \langle C_{k,j} \rangle_{j \in \mathcal{K} \setminus k}$ is the random vector of messages generated by agent $k$.
%
%
    
\noindent{\bf (5) Decision making functions} $\bpi$: Each agent $k$ learns a decision making function to decide an action $A_k \in \mathcal{A}_k$ to solve its assigned task. The realization of this action $a_k$ is decided based on the aggregation of its locally observed state realization $s_k$ and the vector of realized messages received from other agents $\bc_{-k, k} = \langle c_{j,k} \rangle_{j \in \mathcal{K} \setminus k}$. We again consider a deep-learning-based learning solution in which the above decision-making process is governed by a mapping function parameterized by $\psi_k$, i.e., we have
    \begin{equation}
    a_k \sim \pi_{\psi_k}(A_k | S_k, \bC_{-k,k} = \bc_{-k, k}).
    \end{equation}

The main objective of the above multi-agent communication framework is to autonomously learn an emergent communication protocol that extracts the most task-relevant information from localized observations, while simultaneously minimizing both communication overhead and computational complexity at the agents.  
We can therefore formulate the above problem as a decentralized multi-agent multi-task problem, in which each agent focuses on solving its local task by minimizing the task-specific loss, while simultaneously extracting the most useful feature signals to help other agents solve their tasks. We can formulate the joint optimization problem as follows:
\begin{equation}
{\bf P1:} \; \min_{\langle \bpi_{\bpsi}, \brho_{\bphi} \rangle } \sum_{k\in {\cal K}} \cL_k (\pi_{\psi_k}, \rho_{\phi_k}; s_{k}, \bc_{-k,k}),
\end{equation}
where $\bpi_{\bpsi} = \langle \pi_{\psi_k} \rangle_{k\in {\cal K}}$ and $\brho_{\bphi} = \langle \rho_{\phi_k} \rangle_{k\in {\cal K}}$. 
We can observe that optimizing the above decentralized multi-agent communication systems across diverse, simultaneous tasks and agents presents a formidable theoretical and computational challenge. In particular, since agents collaboratively address a diverse set of tasks based on partial environmental observations, their signaling strategies and local decision-making processes become deeply entangled, creating a complex, non-stationary learning landscape in which each agent's optimal policy continuously shifts in response to evolving communication protocols and other agents' actions. 
%
This often leads to significant coordination overhead and the risk of suboptimal convergence, as agents may struggle to distinguish task-relevant features from environmental noise across their correlated yet distinct observations. Given these challenges, a direct optimization of problem {\bf P1} is generally computationally intractable and theoretically impossible. 


\section{A DIB-based Framework}

To address the above challenges, we propose a novel emergent communication framework grounded in the Distributed Information Bottleneck (DIB) theory. In this way, we can transform the intractable joint optimization problem {\bf P1} into an information-theoretic problem focused on extracting the most task-relevant representation from agents' local observations while minimizing communication overhead. The core idea of DIB is to formalize representation learning as a rigorous mathematical trade-off between informational representation complexity and task-relevant information representation. 
%
%
%
More specifically, for each agent $k$, the above trade-off is governed by two competing mutual information metrics: 

\noindent{\bf (1) Representation (computational) complexity} $I(S_k; \bC_{k,-k})$: This term measures how much raw information from the local observation $S_k$ is retained in the emergent message $C_k$. This term is, in fact, the Minimum Description Length (MDL), a widely adopted complexity metric for assessing the computational complexity of computationally bounded intelligence systems\cite{finzi2026epiplexity, rdc2026chai}.

\noindent{\bf (2) Task-relevant (communication) information} $I(Y_k; \bC_{-k,k})$: This term captures the amount of information the emergent message $C_k$ provides regarding the task-specific target intent $Y_k$.

By simultaneously minimizing complexity and maximizing task-relevant information, the collaborative set of agents can learn to transmit only the highly distilled, non-redundant ``state differences" information necessary to solve the task assigned to each agent, while filtering out the task-irrelevant state noise. More formally, we can reformulate the multi-agent emergent communication problem as follows:
\begin{eqnarray}
\label{eq:opt_p2}
&{\bf P2:}& \; \min_{\langle \bpi_{\bpsi}, \brho_{\bphi} \rangle } \sum_{k\in {\cal K}} \cL_k (\pi_{\psi_k},  \rho_{\phi_k}; s_{k}, \bc_{-k,k}) \\ 
&&\;\;\;\;\;\;\;\;\;\;\;\;\;\; - \lambda_k^t I(Y_k; \bC_{k,-k}) + \lambda_k^c I(S_k; \bC_{k,-k}). \nonumber 
\end{eqnarray}
where $\lambda_k^t$ and $\lambda_k^c$ are Lagrangian coefficients that control the tradeoff between complexity and task-relevant information. 


It can be observed that the true marginal distributions and mutual information terms in problem {\bf P2} are computationally intractable in high-dimensional continuous spaces. We therefore need to derive tractable variational bounds for optimizing the joint optimization objective. More specifically, we can prove the following variational bounds. 

\begin{theorem}[Variational Upper Bound of Complexity]
\label{th:complexity}
We can prove that the representation complexity for each agent $k$ has the following upper bound: 
\begin{align}
    &I(S_k; \bC_{k,-k}) \label{eq_VariationalUpperBound} \\
   \leq& \mathbb E_{s_k\sim p_{S_k}}[D_{KL}( p_{\phi_k}(\bC_{k,-k}|S_k=s_k) \Vert r^c_{\phi_k}(\bC_{k,-k}))]\nonumber
\end{align}
where $r^c_{\phi_k}$ is any distribution.
\end{theorem}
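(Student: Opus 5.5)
The plan is the standard variational argument behind the information bottleneck: write the mutual information as an expected KL divergence to the true marginal, then replace that marginal by an arbitrary reference $r^c_{\phi_k}$, paying exactly a nonnegative KL term.

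Write $p_{\phi_k}(\bC_{k,-k}|S_k)$ for the encoder induced by $\rho_{\phi_k}$. Let
\[
p_{\phi_k}(\bc_{k,-k})=\mathbb E_{s_k\sim p_{S_k}}\big[p_{\phi_k}(\bc_{k,-k}|s_k)\big]
\]
be the induced marginal of the messages. By definition,
\[
I(S_k;\bC_{k,-k})=\mathbb E_{s_k\sim p_{S_k}}\Big[D_{KL}\big(p_{\phi_k}(\bC_{k,-k}|S_k=s_k)\,\Vert\, p_{\phi_k}(\bC_{k,-k})\big)\Big].
\]

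Inside the log-ratio, multiply and divide by $r^c_{\phi_k}(\bc_{k,-k})$. This splits the expectation into two terms:
\begin{align*}
I(S_k;\bC_{k,-k}) =& \;\mathbb E_{s_k}\Big[D_{KL}\big(p_{\phi_k}(\bC_{k,-k}|s_k)\Vert r^c_{\phi_k}(\bC_{k,-k})\big)\Big] \\
& - \mathbb E_{\bc_{k,-k}\sim p_{\phi_k}}\Big[\log \tfrac{p_{\phi_k}(\bc_{k,-k})}{r^c_{\phi_k}(\bc_{k,-k})}\Big].
\end{align*}
In the second term the conditioning on $s_k$ has been integrated out, since the integrand depends only on $\bc_{k,-k}$. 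That term is therefore exactly $D_{KL}(p_{\phi_k}(\bC_{k,-k})\Vert r^c_{\phi_k}(\bC_{k,-k}))$. Gibbs' inequality gives $D_{KL}\ge 0$, and dropping this term yields \eqref{eq_VariationalUpperBound}. Equality holds if and only if $r^c_{\phi_k}$ equals the true marginal. The same argument works for densities on continuous message spaces and for pmfs on discrete ones.

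The only delicate step is integrability and support. The decomposition requires $p_{\phi_k}(\cdot|s_k)\ll r^c_{\phi_k}$ for almost every $s_k$. I would handle this by noting that otherwise the right-hand side is $+\infty$ and the bound holds trivially. If finite, the right-hand side dominates $I(S_k;\bC_{k,-k})$, which is then finite and equals the expected KL to the marginal, so the two terms may be subtracted and the split is legitimate. With that observation the theorem holds for any distribution $r^c_{\phi_k}$, as claimed.
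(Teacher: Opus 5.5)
Your proposal is correct and follows the paper's route: write $I(S_k;\bC_{k,-k})$ as an expected KL divergence to the true message marginal, insert $r^c_{\phi_k}$ into the log-ratio, and discard the nonnegative term $D_{KL}(p_{\bC_{k,-k}}\Vert r^c_{\phi_k})$. You are more careful than the paper in one respect and slightly circular in another. The paper states the split for each fixed $s_k$, where it does not hold, because the cross term $\mathbb E_{\bc\sim p_{\phi_k}(\cdot|s_k)}[\log (p_{\bC_{k,-k}}(\bc)/r^c_{\phi_k}(\bc))]$ becomes a KL divergence only after averaging over $s_k$, which is exactly what you do. Your finiteness remark, however, invokes the bound to justify the subtraction that proves it; this is fixed by reading the chain-rule identity $\mathbb E_{s_k}[D_{KL}(p_{\phi_k}(\bC_{k,-k}|s_k)\Vert r^c_{\phi_k})]=I(S_k;\bC_{k,-k})+D_{KL}(p_{\bC_{k,-k}}\Vert r^c_{\phi_k})$ directly in $[0,\infty]$, which needs no subtraction.
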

\begin{IEEEproof}
According to the definition, we have 
\begin{eqnarray}
I(S_k; \bC_{k,-k})=\mathbb E_{s_k\sim p_{S_k}}[D_{KL}(p_{\bC_{k,-k}|S_k=s_k}\Vert p_{\bC_{k,-k}})]. 
\end{eqnarray}
We can then decompose the KL divergence term $D_{KL}( p_{\phi_k}(\bC_{k,-k}|S_k=s_k) \Vert r^c_{\phi_k}(\bC_{k,-k})) = D_{KL}(p_{\bC_{k,-k}|S_k=s_k}\Vert p_{\bC_{k,-k}}) + D_{KL}(p_{\bC_{k,-k}}\Vert r^c_{\phi_k}(\bC_{k,-k}))$. By applying $D_{KL}(p_{\bC_{k,-k}}\Vert r^c_{\phi_k}(\bC_{k,-k})) \ge 0$, we can get the result in (\ref{eq_VariationalUpperBound}). 
This concludes the proof. 
\end{IEEEproof}

\begin{theorem}[Variational Lower Bound of Task-relevant Information]
\label{th:task}
We can prove that the task-relevant information term has the following lower bound:
\begin{eqnarray}
I(Y_k; \bC_{-k,k}) \geq \mathbb E_{\bc\sim p_{\bC_{-k,k}}}[\log r^t_{\phi_k}(Y_k|\bC_{-k,k}=\bc)] \label{eq_VariationalLowerBound}
\end{eqnarray}
where $r^t_{\phi_k}$ is 
the variational approximation of the true posterior. 
\end{theorem}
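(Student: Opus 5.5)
The plan mirrors the proof of Theorem~\ref{th:complexity}: write the mutual information through the true posterior, replace that posterior by the variational distribution $r^t_{\phi_k}$, and control the replacement error with a KL divergence of known sign. Since $r^t_{\phi_k}(Y_k|\bC_{-k,k})$ depends on $Y_k$, I read the expectation in (\ref{eq_VariationalLowerBound}) as over the joint law $p_{Y_k,\bC_{-k,k}}$, that is, over $\bc\sim p_{\bC_{-k,k}}$ and then $y\sim p_{Y_k|\bC_{-k,k}=\bc}$.

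First I write
\begin{align*}
I(Y_k;\bC_{-k,k}) = H(Y_k) + \mathbb E_{p_{Y_k,\bC_{-k,k}}}\big[\log p(Y_k|\bC_{-k,k})\big].
\end{align*}
Next I add and subtract $\log r^t_{\phi_k}(Y_k|\bC_{-k,k})$ inside the expectation. This splits the second term into $\mathbb E[\log r^t_{\phi_k}(Y_k|\bC_{-k,k})]$ plus
\begin{align*}
\mathbb E_{\bc\sim p_{\bC_{-k,k}}}\Big[D_{KL}\big(p_{Y_k|\bC_{-k,k}=\bc}\,\Vert\, r^t_{\phi_k}(\cdot|\bC_{-k,k}=\bc)\big)\Big].
\end{align*}
This is the same decomposition step used for Theorem~\ref{th:complexity}, applied to the posterior instead of the marginal. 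Because the KL term is nonnegative, dropping it gives the Barber--Agakov bound
\begin{align*}
I(Y_k;\bC_{-k,k})\ge H(Y_k)+\mathbb E\big[\log r^t_{\phi_k}(Y_k|\bC_{-k,k})\big].
\end{align*}
Equality holds when $r^t_{\phi_k}$ equals the true posterior.

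The last step, and the main obstacle, is removing $H(Y_k)$ to reach exactly (\ref{eq_VariationalLowerBound}). For a discrete task variable $Y_k$, for example a finite intent or reward label, $H(Y_k)\ge 0$, so dropping it keeps the inequality valid and the statement follows. For a continuous $Y_k$, the differential entropy can be negative, so this step fails in general. In that case I would state the bound up to the additive constant $H(Y_k)$. This constant depends only on the data distribution and not on $\phi_k,\psi_k$, so it does not affect the optimization of {\bf P2}. I would therefore assume $Y_k$ discrete, or otherwise note the additive constant, so that the final inequality is justified.
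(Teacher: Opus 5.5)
Your proof is correct and is essentially the paper's argument. The paper writes $I(Y_k;\bC_{-k,k}) = H(Y_k) - H(Y_k|\bC_{-k,k})$ and bounds the conditional entropy by the cross-entropy $-\mathbb{E}[\log r^t_{\phi_k}(Y_k|\bC_{-k,k})]$, which is exactly your nonnegative expected-KL step, and then discards $H(Y_k)$ without comment. Your explicit condition $H(Y_k)\ge 0$ (e.g., discrete $Y_k$) is the hidden assumption the stated inequality actually needs, since it can fail for a continuous $Y_k$ with negative differential entropy.
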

\begin{IEEEproof}
Following the definition, we have $I(Y_k; \bC_{-k,k}) = H(Y_k) - H(Y_k | \bC_{-k,k})$. Then, by applying the variational property $H(Y_k | \bC_{-k,k}) \le \min_{r^t_{\phi_k}} [- \log r^t_{\phi_k}(Y_k|\bC_{-k,k}=\bc) ]$, we can obtain
the result in (\ref{eq_VariationalLowerBound}). 
This concludes the proof.
\end{IEEEproof}

By substituting the above bounds into problem {\bf P2}, we can rewrite the optimization problem as follows:
\begin{eqnarray}
\lefteqn{{\bf P3:} \; \min_{\langle \bpi_{\bpsi}, \brho_{\bphi} \rangle } \sum_{k\in {\cal K}} \cL^{\bf P3}_k (\pi_{\psi_k},  \rho_{\phi_k}; s_{k}, \bc_{-k,k})} \\
&=& \sum_{k\in {\cal K}} \left(  \cL_k (\pi_{\psi_k},  \rho_{\phi_k}; s_{k}, \bc_{-k,k}) \right. \nonumber\\
&&- \lambda_k^t \mathbb E_{Y_k\bC_{-k,k}}[\log r^t_{\phi_k}(Y_k|\bC_{-k,k})]\nonumber\\
&& \left. + \lambda_k^c \mathbb E_{S_k}[D_{KL}(p_{\phi_k}(\bC_{k,-k}|S_k)\Vert r^c_{\phi_k}(\bC_{k,-k}))] \right).\nonumber
\end{eqnarray}

The joint optimization loss in problem {\bf P3} offers several advantages over traditional multi-agent emergent communication solutions. In particular, by integrating the multi-agent multi-task DIB objective into the task loss, the proposed loss effectively addresses the inherent non-stationarity and high variance found in decentralized multi-agent networking environments. While traditional solutions often suffer from ``representation drift", where the meaning of an emergent message $\bC_{k,-k}$ shifts unpredictably as other agents'  decision making functions $\bpi_{-k}$ evolve during the training, our formulation introduces a stationary semantic anchor through the local task-relevance term $I(Y_k; \bC_{-k,k})$. Furthermore, incorporating the MDL-based complexity term prevents ``informational collapse" and overfitting to environmental state noise, ensuring that the developed protocols are not only bandwidth-efficient but also robust. To summarize, the joint optimization problem {\bf P3} converts the entangled, ill-posed decentralized optimization problem {\bf P1} into a well-conditioned learning task, significantly accelerating convergence and yielding a more stable, semantically grounded communication convention for multi-agent collaboration.

\section{Generalization Bounds}
\label{Section_Theory}


To evaluate the robustness of the emergent communication protocols learned from our proposed joint optimization problem {\bf P3}, we analyze the generalization gap between the empirical loss and the expected risk. In the context of multi-agent emergent communication, a communication protocol generalizes well if the message $\bC$ derived from a finite training dataset remains a sufficient statistic for each task $Y_k$ assigned to each agent $k$ when encountering previously unseen environmental states $s_k \in \mathcal{S}_k$. 

Before presenting the specific bounds, let us first define the generalization errors. Suppose each agent has been given a training dataset ${\cal V}_k = \langle z_{k,i} \rangle_{i \in \{1, \ldots, n\}}$ where $z_{k,i} = \langle s_{k,i}, y_{k,i} \rangle$ is the $i$th data sample. Suppose the ground truth distribution of $z_{k,i}$ is given by ${\cal Z}_k$. It is important to note that the state-task pairs in the training dataset are sufficient to train both the decision-making functions and the emergent communication signaling models. In this multi-agent emergent communication problem, the training dataset provides the desired outcomes, i.e., target intents, that the agents aim to achieve. This allows the agents to learn to imitate these goals and autonomously generate communication protocols that facilitate their attainment, and the resulting actions and communication signaling are the latent variables that the agents will eventually learn to produce in order to reach the desired outcome $y_k$. This formulation differs from traditional trajectory-based learning, which would otherwise require a dataset comprising comprehensive behavioral trajectories, including sequences of historical actions and previous communication messages, to model the step-by-step evolution of the agents' policies. Therefore, in the rest of this section, we can rewrite the sample-wise joint loss function in the problem {\bf P3} as $\cL_{k,i} (\pi_{\psi_k},  \rho_{\phi_k}; z_{k,i})$ for sample $z_{k,i}$.


The generalization error of a specific agent $k$ is defined as the absolute difference between the population loss, i.e., the expected loss calculated based on the true distribution of state-task pairs given by ${\cal L}^{\rm pop}_k (\pi_{\psi_k},  \rho_{\phi_k})  $=$ \mathbb{E}_{z_k \sim {\cal Z}_k} {\cal L}_{k} (\pi_{\psi_k},  \rho_{\phi_k}; z_k)$, and empirical loss, i.e., the sample average loss calculated based on the given training dataset given by ${\cal L}^{\rm emp}_k (\pi_{\psi_k},  \rho_{\phi_k}; {\cal Z}_{k})$ $=$ ${1 \over n} \sum_{z_{k,i} \in {\cal Z}_k} {\cal L}_{k} (\pi_{\psi_k},  \rho_{\phi_k}; z_{k,i})$. 
Formally, define $w_k:=(\psi_k,\phi_k)$, the generalization error for a specific training instance $z_{k,i}$ of agent $k$ is the absolute difference between the population and empirical loss:
\begin{equation}
    \mathcal{E}^G_k (\pi_{\psi_k}, \rho_{\phi_k}) = \mathbb E_{w_k\sim p_{W|\mathcal Z_k}}| {\cal L}^{\rm pop}_k (w_k) - {\cal L}^{\rm emp}_k (w_k; {\cal Z}_{k}) |
\end{equation}
%
%
We can prove the following theoretical results: 

\begin{theorem}[Upper Bound of Generalization Error]
\label{th:gbound_point}
Assume the loss function is $\sigma$-sub-Gaussian.
Then, with probability at least $1 - \delta$,
the generalization error for the $k$-th agent with parameter $w_k:=(\psi_k,\phi_k)$ is bounded by
\begin{eqnarray}
\mathcal{E}^G_k (\pi_{\psi_k}, \rho_{\phi_k}) \le  \left( 
\frac{D_k\cdot \mathcal{M}_{k}}{\delta} \right)^{\frac{t-1}{t}},
\label{eq_Upperbound_GenError}
\end{eqnarray}
where 
\begin{eqnarray}
    D_k &=&\exp \left( \frac{t-1}{t} D_t (p_{W|\mathcal Z_k} (w) \Vert \mu (w)) \right), \\ 
    \mathcal{M}_{k} &=& 2\left(\frac{\sigma^2 t}{n(t-1)}\right)^{\frac{t}{2t-2}},
\end{eqnarray}
and $D_t$ denotes the R\'enyi divergence of order $t>0$,  $\mu$ is a given prior distribution of $w_k$.
\end{theorem}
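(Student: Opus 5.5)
The plan is a change of measure from the data-dependent posterior $p_{W|\mathcal Z_k}$ to the data-independent prior $\mu$ via Hölder's inequality, which produces a R\'enyi divergence term. I then control the remaining prior-averaged moment of the generalization gap using sub-Gaussian concentration, and convert the expectation over the training set into a high-probability statement with Markov's inequality.

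\textbf{Change of measure.} Fix the training set and write $g(w):=|{\cal L}^{\rm pop}_k(w)-{\cal L}^{\rm emp}_k(w;\mathcal Z_k)|$. Let $q:=t/(t-1)$ be the H\"older conjugate of $t>1$. Assuming $p_{W|\mathcal Z_k}\ll\mu$, write
\[
\mathbb E_{p_{W|\mathcal Z_k}}[g]=\mathbb E_{\mu}\Big[\tfrac{dp_{W|\mathcal Z_k}}{d\mu}\,g\Big].
\]
H\"older's inequality with exponents $(t,q)$ then gives
\[
\mathbb E_{p_{W|\mathcal Z_k}}[g]\le\Big(\mathbb E_\mu\Big[\big(\tfrac{dp_{W|\mathcal Z_k}}{d\mu}\big)^t\Big]\Big)^{1/t}\big(\mathbb E_\mu[g^q]\big)^{1/q}.
\]
The first factor equals $\exp\big(\tfrac{t-1}{t}D_t(p_{W|\mathcal Z_k}\Vert\mu)\big)$ by the definition of $D_t$, which is exactly $D_k$.

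\textbf{Moment bound under the prior.} Because $\mu$ does not depend on the data, Fubini gives
\[
\mathbb E_{\mathcal Z_k}\mathbb E_\mu[g^q]=\mathbb E_\mu\,\mathbb E_{\mathcal Z_k}[g(w)^q].
\]
For each fixed $w$, ${\cal L}^{\rm pop}_k(w)-{\cal L}^{\rm emp}_k(w;\mathcal Z_k)$ is an average of $n$ i.i.d.\ centered $\sigma$-sub-Gaussian variables, so it is $\sigma/\sqrt n$-sub-Gaussian. I will use the standard sub-Gaussian absolute-moment bound, obtained by integrating the tail $2e^{-nx^2/(2\sigma^2)}$ and bounding the resulting Gamma function:
\[
\mathbb E|X|^q\le 2\big(q\sigma^2/n\big)^{q/2}.
\]
With $q=t/(t-1)$ this equals $2\big(\tfrac{\sigma^2t}{n(t-1)}\big)^{\frac{t}{2t-2}}=\mathcal M_k$, and the bound holds uniformly in $w$, so $\mathbb E_{\mathcal Z_k}\mathbb E_\mu[g^q]\le\mathcal M_k$. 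Markov's inequality then yields, with probability at least $1-\delta$ over $\mathcal Z_k$, that $\mathbb E_\mu[g^q]\le\mathcal M_k/\delta$. Substituting into the H\"older bound gives
\[
\mathcal E^G_k\le D_k\,(\mathcal M_k/\delta)^{(t-1)/t}.
\]

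\textbf{Main obstacle.} The delicate step is the bookkeeping needed to land exactly on $(D_k\mathcal M_k/\delta)^{(t-1)/t}$. The route above places $D_k$ outside the power $(t-1)/t$, and for $t>1$ that is a weaker bound than the statement's form. I would first check whether the intended reading has the R\'enyi factor inside the power, i.e.\ whether the stated $D_k$ should be $\exp(D_t)$ rather than $\exp\big(\tfrac{t-1}{t}D_t\big)$. If so, the two forms agree. Otherwise I would try to reach the stated form by applying Markov's inequality to the joint quantity $\mathbb E_\mu\big[(dp/d\mu)^t\big]\,\mathbb E_\mu[g^q]$ instead of to the moment term alone. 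I would also handle the case $D_t=\infty$, where the bound is trivial, and the constant in the sub-Gaussian moment bound, which may need adjusting to a factor like $\Gamma(q/2+1)$ depending on the sub-Gaussian convention.
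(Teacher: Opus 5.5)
Your argument is the paper's argument: H\"older with exponents $(t,\,t/(t-1))$ against the data-independent prior $\mu$, the R\'enyi identity for the density-ratio factor, and a sub-Gaussian $q$-th moment bound with Markov's inequality for the prior-averaged gap. The paper only invokes a standard moment bound for this last step, so your Fubini exchange is what actually justifies it.

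The bookkeeping discrepancy you flag is genuine, and the paper does not resolve it. Its proof computes the complexity factor as $\exp\bigl(\tfrac{t-1}{t}D_t\bigr)$ and then writes this as $D_k^{(t-1)/t}$, which contradicts its own definition $D_k=\exp\bigl(\tfrac{t-1}{t}D_t\bigr)$. So the paper's proof, like yours, establishes exactly $D_k(\mathcal M_k/\delta)^{(t-1)/t}=\bigl(e^{D_t}\mathcal M_k/\delta\bigr)^{(t-1)/t}$. That is your reading, with $e^{D_t}$ inside the power.

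Do not pursue the joint-Markov fallback, because the literal form of the statement is false. Take $w\in\{1,\dots,N\}$ with $\mu$ uniform. Let $z$ have $N$ independent $\mathcal N(0,\sigma^2)$ coordinates with loss $z^{(w)}$, and let the posterior be the point mass on $\arg\max_w g(w)$. Then $D_t=\log N$ for every $t$. With $q=t/(t-1)$, the stated bound equals $N^{1/q^2}(2/\delta)^{1/q}\sqrt{q}\,\sigma/\sqrt n$. Choosing $q=2\sqrt{\log N}$ makes this $O\bigl((\log N)^{1/4}\bigr)\sigma/\sqrt n$ for fixed $\delta$. The actual generalization error is $\max_w g(w)\approx\sqrt{2\log N}\,\sigma/\sqrt n$ with probability tending to one. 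Your bound has $N^{1/q}$ in place of $N^{1/q^2}$, which is consistent with this example.

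Two small points to tidy up:
\begin{itemize}
\item The theorem needs $t>1$, not $t>0$, for $q$ and $\mathcal M_k$ to make sense.
\item The moment constant $2(q\sigma^2/n)^{q/2}$ is correct for all $q>1$, but your Gamma-function route only delivers it for $q\ge 2$, where $\Gamma(q/2+1)\le (q/2)^{q/2}$. For $q\in(1,2)$, i.e., $t>2$, use Lyapunov instead: $\mathbb E|X|^q\le(\sigma^2/n)^{q/2}\le 2(q\sigma^2/n)^{q/2}$.
\end{itemize}
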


\begin{IEEEproof}
By applying H\"older's inequality and set $p=t$ and $q=\frac{t}{t-1}$, we have 
\begin{eqnarray}
\lefteqn{\mathcal{E}_k^G (\pi_{\psi_k}, \rho_{\phi_k}) = \int R_k(w) p_{W|\mathcal Z_k}(w) dw \le }  \nonumber \\
&& \underbrace{\left( \int R_k(w)^q \mu(w) dw \right)^{1 \over q}}_{\text{Empirical Risk Term}}
\underbrace{\left( \int \Bigl( \frac{p_{W|\mathcal Z_k}(w)}{\mu(w)} \Bigr)^t \mu(w) dw \right)^{1 \over t}}_{\text{Complexity Term}} \nonumber
\end{eqnarray}
where, for the empirical risk term, we can apply the $\sigma$-Sub-Gaussian moment bound\cite{boucheron2013concentration} and obtain the following result: 
\begin{eqnarray}
\left( \int R_k(w)^q \mu(w) dw \right)^{1/q}\leq 
    \left( \frac{\mathcal{M}_{k}}{\delta} \right)^{1/q},
    \label{eq:moment_bound}
\end{eqnarray}
and the complexity term corresponds to the exponential of the R\'enyi divergence of order $t$, given by
\begin{eqnarray}
\lefteqn{ \left( \int \Bigl( \frac{p_{W|\mathcal Z_k}(w)}{\mu(w)} \Bigr)^t \mu(w) dw \right)^{1/t} =} \nonumber \\ 
&& \exp \left( \frac{t-1}{t} D_t (p_{W|\mathcal Z_k}(w) \Vert \mu(w)) \right) = D_k^{\frac{t-1}{t}}. 
\label{eq:complex}
\end{eqnarray}

By combining (\ref{eq:moment_bound}) and (\ref{eq:complex}), we can obtain 
(\ref{eq_Upperbound_GenError}).
\end{IEEEproof} 

\begin{remark}
In Theorem~\ref{th:gbound_point}, the term $D_k$, which measures the divergence between the learned posterior $p_{W|\mathcal Z_k}$ and the prior $\mu$, quantifies the complexity of learning an emergent protocol. A smaller $D_k$ implies that the agent's emergent protocol does not over-specialize to specific training datasets. This aligns with the MDL regularization in the DIB objective in \textbf{P3}. A smaller $D_k$, which means higher generalization, corresponds to lower representational complexity, which is precisely what the MDL regularization term $I(S_k, C_{k,-k})$ seeks to minimize. 
  \end{remark}

 \begin{remark}
 The term $\mathcal{M}_{k}$ captures the statistical difficulty of learning from finite and noisy environmental observations in the AgentNet.
A smaller $\mathcal{M}_k$, achieved through more training samples $n$ or lower loss variance $\sigma^2$, directly tightens the generalization bound.
This aligns with the DIB objective in \textbf{P3}: the task-relevance term $I(Y_k; C_{-k,k})$ encourages the emergent protocol to extract semantically stable features, thereby reducing the effective variance $\sigma^2$, while the MDL term $I(S_k; C_{k,-k})$ prevents the protocol from overfitting to spurious environmental noise.
Together with $D_k$, the term $\mathcal{M}_k$ provides a rigorous decomposition of the generalization error into data-driven uncertainty and protocol complexity, jointly justifying the information-theoretic design of the proposed framework.
\end{remark}

\section{Prototype and Experimental Results}
\label{Section_Prototype}


\subsection{Prototype and Experiment Setup}
We developed an emergent communication prototype, consisting of user equipments (UEs), a gNodeB (gNB), and a 5G core (5GC) network, based on an open-source RAN and a softwareized 5G core network platform. 
We consider emergent communication between two types of agents deployed in two layers of a networking system: an application-layer agent and a physical-layer agent. To ensure the results reflect real-world traffic dynamics, the application-layer agent utilizes a publicly available dataset of real-world smartphone traffic\cite{choi2023ml}. This dataset includes data generated from five popular mobile applications, covering diverse services such as live streaming, video conferencing, and mobile gaming. For the physical-layer agent, the real-world traffic is fed into our developed hardware prototype to evaluate the specific physical-layer resource configurations required to support each application's unique requirements. The experiment investigates how these two agents develop an emergent communication protocol to manage network resources. Specifically, the application-layer agent observes and predicts real-time dynamics of the application traffic and learns to send optimized communication signals to the physical-layer agent. In response to the signals received from the application-layer agent, the physical-layer agent dynamically adjusts and allocates hardware resources to ensure the underlying physical-layer resource configurations can support the application's real-time data demands. 

To evaluate the effectiveness of our framework, we compare the proposed solution against a state-of-the-art (SOTA) benchmark, called {\bf EC-SOTA}, proposed in \cite{Lin2021EmergCommAutoEncoder}. This baseline employs a modular approach in which the task-execution and communication-signaling models are trained independently. Specifically, it utilizes a dedicated autoencoder to learn latent representations of application-layer data, which are then transmitted to the physical-layer agent.

\subsection{Experimental Results}


\begin{figure}
    \centering
    \includegraphics[width=0.6\linewidth]{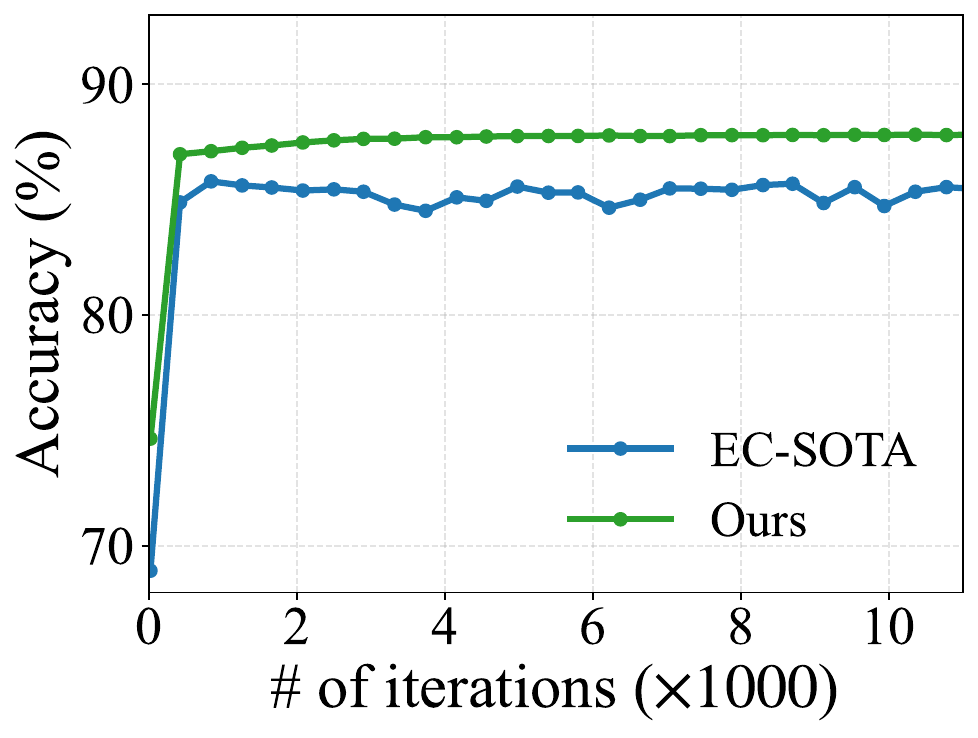}
    \vspace{-0.1in}
    \caption{\footnotesize Application-layer agent's accuracy under different iteration numbers. }
    \label{Fig_AppAgentConvergence}
    \vspace{-0.1in}
\end{figure}


\begin{figure}
    \centering
    \begin{minipage}{0.155\textwidth}
        \centering
        \includegraphics[width=\linewidth]{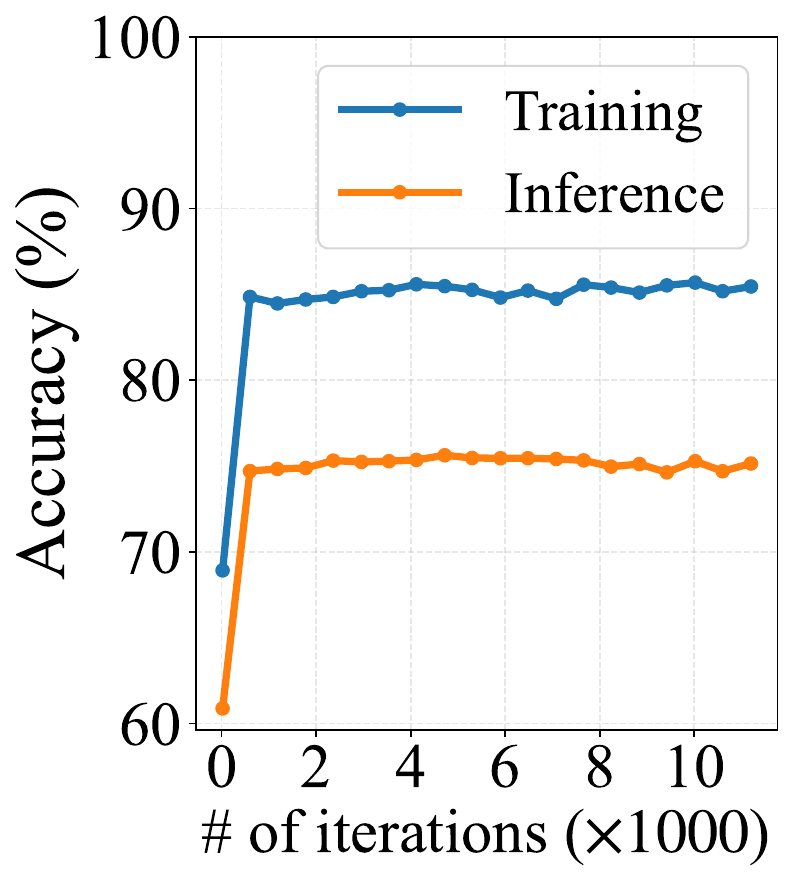} 
        \small (a)
    \end{minipage}
    \hfill
    \begin{minipage}{0.155\textwidth}
        \centering
        \includegraphics[width=\linewidth]{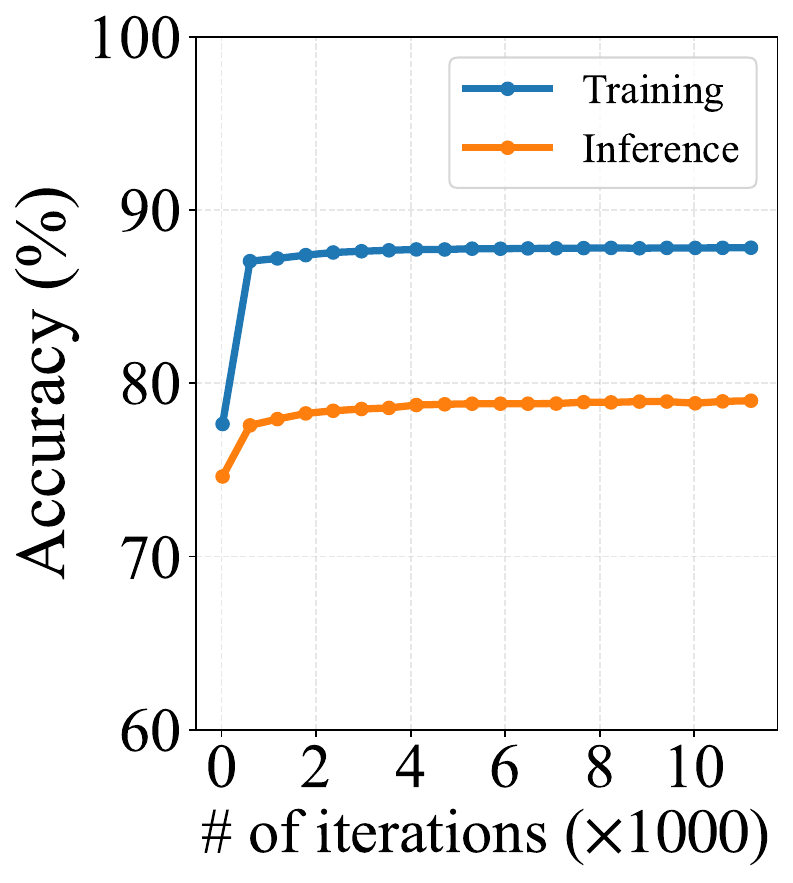} 
        \small (b) 
    \end{minipage}
    \hfill
    \begin{minipage}{0.155\textwidth}
        \centering
        \includegraphics[width=\linewidth]{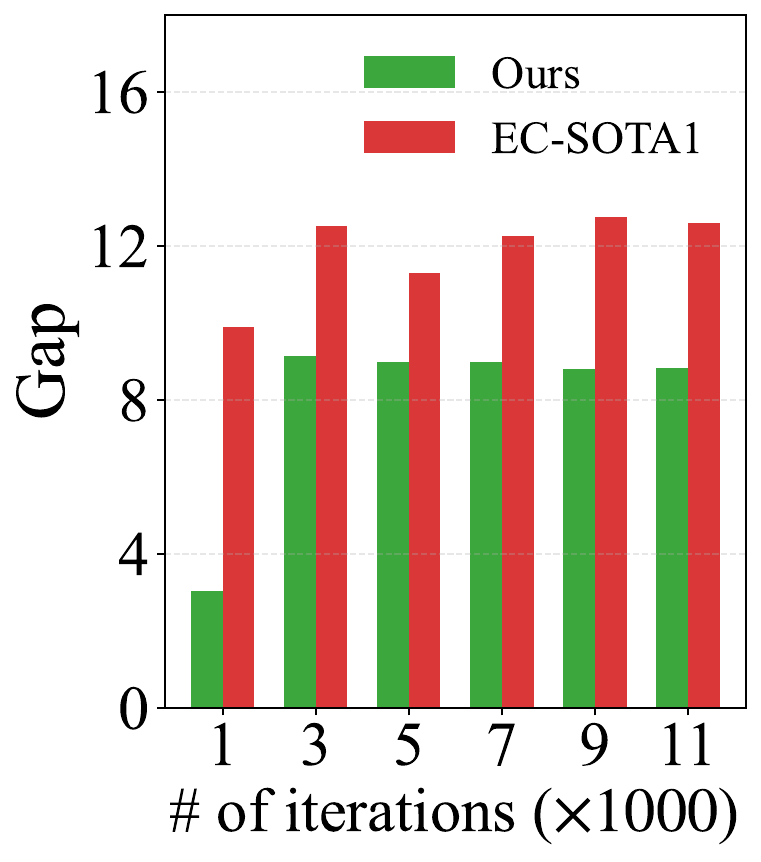} 
        \small (c) 
    \end{minipage}
    \caption{\footnotesize Generalization errors 
    of (a) EC-SOTA and (b) our proposed solution, and (c) comparison of both solutions under different numbers of iterations.}
    \label{Fig_GeneralizationError}
\end{figure}

In Fig. \ref{Fig_AppAgentConvergence}, we present the accuracy of the application-layer agent under different numbers of iterations. We observe that our proposed solution consistently achieves a higher accuracy level than the EC-SOTA solution. 
In Fig. \ref{Fig_GeneralizationError}, we compare the generalization error of our proposed solution with that of the EC-SOTA benchmark. As illustrated in Fig. \ref{Fig_GeneralizationError} (a) and (b), the gap between training and inference accuracy is significantly narrower for our proposed solution across traffic generated by various smartphone applications, including the high-bandwidth and latency-sensitive scenarios such as Live Streaming and Mobile Gaming. Fig. \ref{Fig_GeneralizationError} (c) further highlights the efficiency of our proposed framework, which converges to a significantly lower generalization error at a much faster speed than the EC-SOTA, maintaining a stable error floor as iterations increase. This further justifies the effectiveness of our proposed joint loss function with the multi-agent and multi-task DIB regularizer, which successfully filters out training noise, thereby capturing the underlying semantic features of the application traffic.

\section{Conclusion}
\label{Section_Conclusion}
This paper establishes an information-theory framework for emergent communication within the evolving AgentNet landscape. 
Our proposed framework unifies the task-specific decision-making and emergent communication signaling by 
leveraging the multi-agent and multi-task DIB, which establishes a theoretical foundation for quantifying and optimizing the balance between computational and communication resource efficiency and task-relevance information representation. 
We derive theoretical bounds on the generalization error. 
Validation on a real-world hardware prototype demonstrates that our proposed framework not only outperforms existing heuristic-driven models but also offers a scalable, theoretically sound foundation for the future of fully autonomous AgentNet in 6G and beyond systems.

\section*{Acknowledgment}
This work was supported by the National Natural Science Foundation of China under grant 62525109 and the Mobile Information Network National Science and Technology Key Project under grant 2024ZD1300700.

\bibliographystyle{IEEEtran}
\bibliography{DeepLearningRef}

\end{document}